\documentclass[preprint,1p]{elsarticle}
\geometry{a4paper,left=2.5cm,right=2.5cm,top=2.5cm,bottom=2.5cm}
\usepackage[colorlinks,linkcolor=blue,anchorcolor=blue,citecolor=blue]{hyperref}
\usepackage{graphicx}%
\usepackage{subcaption}
\usepackage{multirow}%
\usepackage{amsmath,amssymb,amsfonts}%
\usepackage{amsthm}%
\usepackage{mathrsfs}%
\usepackage{xcolor}%
\usepackage{textcomp}%
\usepackage{manyfoot}%
\usepackage{booktabs}%
\usepackage{algpseudocode}%
\usepackage{listings}%
\usepackage[linesnumbered,ruled]{algorithm2e}

\usepackage[normalem]{ulem}
\useunder{\uline}{\ul}{}
\biboptions{numbers,sort&compress}
\newtheorem{definition}{Definition}

\newtheorem{theorem}{Theorem}

\usepackage{url}

\begin{document}
\begin{frontmatter}
\title{Outlier detection in mixed-attribute data: a semi-supervised approach with fuzzy approximations and relative entropy}

\author[inst1]{Baiyang Chen}                    
\author[inst1]{Zhong Yuan}                      
\author[inst2]{Zheng Liu}                       
\author[inst1,inst2]{Dezhong Peng\corref{cor1}}\ead{pengdz@scu.edu.cn}
\author[inst1]{Yongxiang Li}                    
\author[inst1]{Chang Liu}                       
\author[inst3]{Guiduo Duan}                     

\cortext[cor1]{Corresponding author: Dezhong Peng}
\affiliation[inst1]{organization={College of Computer Science, Sichuan University}, city={Chengdu}, postcode={610065}, country={China}}
\affiliation[inst2]{organization={Sichuan National Innovation New Vision UHD Video Technology Co., Ltd.}, city={Chengdu}, postcode={610095}, country={China}}
\affiliation[inst3]{organization={School of Computer Science and Engineering, University of Electronic Science and Technology of China}, city={Chengdu}, postcode={611731}, country={China}}

\begin{abstract}
    Outlier detection is a critical task in data mining, aimed at identifying objects that significantly deviate from the norm. Semi-supervised methods improve detection performance by leveraging partially labeled data but typically overlook the uncertainty and heterogeneity of real-world mixed-attribute data. This paper introduces a semi-supervised outlier detection method, namely fuzzy rough sets-based outlier detection (FROD), to effectively handle these challenges. Specifically, we first utilize a small subset of labeled data to construct fuzzy decision systems, through which we introduce the attribute classification accuracy based on fuzzy approximations to evaluate the contribution of attribute sets in outlier detection. Unlabeled data is then used to compute fuzzy relative entropy, which provides a characterization of outliers from the perspective of uncertainty. Finally, we develop the detection algorithm by combining attribute classification accuracy with fuzzy relative entropy. Experimental results on 16 public datasets show that FROD is comparable with or better than leading detection algorithms. All datasets and source codes are accessible at https://github.com/ChenBaiyang/FROD.
    \footnote{This manuscript is the accepted author version of a paper published by Elsevier. The final published version is available at \url{https://doi.org/10.1016/j.ijar.2025.109373
    }.}
    \end{abstract}
    

\begin{keyword}
    Semi-supervised outlier detection\sep Fuzzy rough sets\sep Mixed-attribute data\sep Fuzzy approximations\sep Relative entropy
    \end{keyword}
\end{frontmatter}


\section{Introduction}
    Outlier detection (OD), also known as anomaly detection, aims to identify objects or observations that deviate from the norm or expected behaviors. Outliers can have a significant impact on data analysis, as they may lead to inaccurate or misleading conclusions. 
    Moreover, anomalous data points may have important implications for various systems. For example, abnormal data in healthcare can affect patient monitoring, diagnostics, and treatment decisions. Unusual vital signs, abnormal laboratory results, or unexpected medical imaging findings can prompt further investigation, alter treatment plans, or trigger public health responses.   
    Therefore, OD is critical in data preprocessing and data mining and has various applications, including medical diagnosis \cite{Fernando2020Neural}, fraud detection \cite{pourhabibi2020fraud}, network intrusion detection \cite{dey2019amachine}, dangerous driving detection \cite{Liu2022Symbolic}, etc.
    
    As outliers are typically rare events and labeled data are often difficult to collect, most detection algorithms \cite{Liu2008IForest, Jiang2015Outlier, Ruff2018DeepSVDD, Li2022ECOD, almardeny2022ROD, Li2022DCROD} are designed to identify outliers without relying on any labeled data. Meanwhile, a small number of labels can be obtained and used to improve OD performance, thus reducing the false positive rate \cite{Estiri2019}. Consequently, many semi-supervised detection approaches \cite{2014OE, Pang2018RAMODO, Zhao2018XGBOD, Pang2019DevNet, Pang2019PReNet, Ruff2020DeepSAD, Huang2021ESAD, Zhou2022FEAWAD}, which leverage partially labeled data, have emerged as promising solutions to improve detection performance with minimal supervision.
    However, these methods often fail to account for two critical challenges in real-world applications: uncertainty and heterogeneity. Uncertainty arises from incomplete, noisy, or imprecise data, which are common in real-world scenarios and can obscure the distinction between normal and anomalous instances. Ignoring uncertainty can lead to unreliable or biased detection results. 
    Heterogeneity, on the other hand, refers to the presence of mixed-attribute data \cite{Zhang2016mixeddata, Chen2024COD}, where attributes include both numeric and categorical values or other complex types. Most existing methods simplify this complexity by transforming non-numeric data into numerical formats, which may lead to a loss or distortion of the original information. As a result, these methods struggle to capture the intricate relationships within heterogeneous datasets, reducing their effectiveness in complex, real-world datasets.
     
    To address these challenges, the fuzzy rough set (FRS) theory \cite{dubois1990rough} offers a flexible framework for knowledge classification with uncertainty by modeling a concept with fuzzy approximations. 
    It enables us to classify objects even if their values do not precisely match the criteria for a particular knowledge class. Thus, FRS can be used to identify and classify instances that are difficult to categorize as either normal or anomalous \cite{yuan2022FRGOD, Chen2024MFIOD}.
    Furthermore, various fuzzy membership functions are available to represent the membership degree of data points that have multiple types of values or representations, such as hybrid fuzzy similarity \cite{Yuan2021FIEOD} and kernelized functions \cite{Hu2011kernelized}. 
    This allows for directly processing mixed-attribute data without data type transformation, thus retaining more valuable information for knowledge discovery and data mining.

    The motivation for this work arises from the observation that outliers, defined as objects that significantly deviate from the majority of a dataset, introduce additional possible outcomes to a system. Consequently, the presence of outliers increases the uncertainty or randomness within the system.
    In this context, Shannon's information theory, as a mathematical model for measuring uncertainty, can be used to characterize outliers \cite{Jiang2010IE, Yuan2021FIEOD}.
    Specifically, we leverage the unlabeled data to compute an outlier factor for each data point based on fuzzy information entropy, which generalizes Shannon's entropy to the fuzzy case. Then, a minimal number of labeled data are used to construct fuzzy decision systems, through which we introduce the attribute classification accuracy derived from fuzzy approximations to evaluate the contribution of an attribute set in outlier detection. Finally, given a group of attribute sets, if the resulting outlier factors of an object are always high and the associated attribute classification accuracies are relatively large, then we may regard the object as an outlier in the dataset. 

    With the above-mentioned considerations, this paper proposes a semi-supervised outlier detection method, namely fuzzy rough sets-based outlier detection (FROD), for mixed-attribute data. The main contributions of this paper are outlined as follows.
    \begin{itemize}[]
        \color{blue}
        \item This paper proposes a fuzzy rough sets-based outlier detection algorithm in a semi-supervised manner.        
        \item We introduce attribute classification accuracy to assess their contribution with minimal labeled data.
        \item We employ fuzzy relative entropy to characterize outliers from the perspective of uncertainty.
        \item Experiments on 16 public datasets demonstrate that FROD performs better than or equal to leading detectors.
    \end{itemize}

{\color{blue}The remainder of this paper is organized as follows: In the next section, we review the related works. Section 3 provides the necessary preliminaries. In Section 4, we present our proposed method, followed by the experimental results and analysis in Section 5. Finally, Section 6 concludes this paper.}

\section{Related works}
    Outlier detection techniques are broadly categorized into unsupervised, semi-supervised, and supervised detectors according to their degree of reliance on labeled data. Supervised detection methods are often impractical due to their dependence on large amounts of labeled data, which can be difficult and costly to obtain. Therefore, we focus on reviewing unsupervised and semi-supervised approaches in this section.
    
\subsection{Unsupervised outlier detection (UOD)}
    UOD algorithms aim to identify outliers according to the inherent characteristics of data and do not rely on any labeled data. 
    These methods typically make assumptions about the distribution or expected behavior of the data and specify instances that deviate significantly from these assumptions. 
    UOD methods mainly include the following branches:
\begin{itemize}
    \item Statistical-based methods assume that normal data are generated by a probabilistic model, and those found in the low-probability regions are considered outliers. These methods are generally divided into parametric and non-parametric approaches. Parametric methods assume a distribution for the data and estimate model parameters from known data, often using Gaussian models \cite{Yang2009GMM} or regression models \cite{zhang2013advancements, satman2013new}. In contrast, non-parametric methods do not assume a specific distribution while typically learn distribution directly from the input data. Histogram-based techniques are widely used \cite{gebski2007efficient, Goldstein2012HBOS}, but selecting an appropriate bin size can lead to high false negative rates. Kernel Density Estimation (KDE) can improve this, as shown by \citet{latecki2007outlier}, although it struggles with high-dimensional data. \citet{zhang2018adaptive} explore an adaptive KDE method for nonlinear systems, while more recently, \citet{Li2022ECOD} introduce a non-parametric model based on empirical cumulative distributions for efficient outlier detection (ECOD).

    \item Distance-based methods identify outliers as objects that are significantly distant from the majority of other points \cite{Knorr1998Algorithms}. While simple and widely used, early distance-based methods are sensitive to parameter selection and computational inefficiency. \citet{Ramaswamy2000Efficient} improve efficiency by ranking objects based on their distance to k-nearest neighbors (kNN), identifying top-ranked ones as outliers. \citet{angiulli2002fast, angiulli2005outlier} use space-filling curves to accelerate computations, while \citet{bay2003mining} employ randomization for near-linear time performance. \citet{ghoting2008fast} propose a faster algorithm with logarithmic complexity in data size. To address sensitivity to the k parameter, \citet{Wang2015Afast} develop a minimum spanning tree-based method for detecting both global and local outliers. Recent works often leverage deep learning techniques. The Deep Support Vector Data Description (DeepSVDD) model \cite{Ruff2018DeepSVDD} projects data into a hypersphere that encloses normal data points, while the Learnable Unified Neighbourhood-based Anomaly Ranking (LUNAR) model \cite{Goodge2022LUNAR} uses graph neural networks to assess outliers through a message passing framework.

    \item Density-based methods assume that outliers are located in regions of low data density.  Local Outlier Factor (LOF) \cite{Breunig2000LOF} is the first work, which measures the local density deviation of a point to its neighbors. 
    Following LOF, various density-based methods have emerged.
    For example, Connectivity-based Outlier Factor (COF) \cite{Tang2002COF} improves LOF by considering the neighbors' connectivity in addition to their density.
    Relative Density Factor (RDF) \cite{Ren2004RDF} compares an object’s density to its neighbors, while Influenced Outlierness (INFLO) \cite{Jin2006INFLO} address k-nearest neighbor sensitivity by considering both nearest and reverse neighbors.
    Local Outlier Probabilities (LoOP) \cite{Kriege2009LoOP} combine density-based scoring with probabilistic methods.
    Kernel density estimation-based methods further improve robustness, with Gao et al. \cite{gao2011rkof} proposing the Robust Kernel-based Outlier Factor (RKOF) and Tang and He \cite{tang2017alocal} developing the Relative Density-based Outlier Score (RDOS). 
    More recently, \citet{Li2022DCROD} introduce the density changing rate (DCROD) for outlier detection.
    \citet{Yuan2023WFRDA} propose the definition of fuzzy-rough density using fuzzy rough sets for mixed-attribute data (WFRDA).
    
    \item Clustering-based methods assume that outliers do not belong to any cluster or belong to small, sparse clusters. For instance, He et al. \cite{He2003CBLOF} introduce Cluster-Based Local Outlier Factor (CBLOF) to identify these outliers. \citet{Huang2017ROCF} propose Relative Outlier Cluster Factor (ROCF) that first clusters the dataset by constructing mutual neighborhood graphs and then creates a decision graph to detect outliers and outlier clusters. These algorithms typically focus on finding clusters, often treating outliers as noise that can be ignored or tolerated, which can lead to performance bottlenecks. Therefore, some researchers handle clustering and anomaly detection at the same time. For example, \citet{hautamaki2005improving} propose a clustering algorithm that removes outliers while simultaneously identifying clusters and outliers, improving centroid estimates of generated distributions. \citet{gan2017k} enhance the k-Means algorithm by adding an extra cluster to accommodate all outliers, thereby providing both data clustering and outlier detection. 
    Similarly, \citet{Liu2021COR} present the Clustering with Outlier Removal (COR) algorithm that employs Holoentropy to assess cluster compactness while identifying outliers during the clustering process. 
    \end{itemize}
    
    It's worth noting that the success of UOD algorithms heavily relies on the appropriateness of the assumptions made by the method \cite{han2022ADbench}, as well as appropriate parameter tuning, such as choosing the proper value for the number of nearest neighbors. 
    
\subsection{Semi-supervised outlier detection (SSOD)}
    SSOD methods involve using a combination of labeled and unlabeled data to detect outliers. 
    Labeled data often provide information about what constitutes an outlier, while unlabeled data may contain valuable information about the underlying structure, patterns, and distribution of the data, which can aid in identifying outliers. 
    Early SSOD approaches \cite{gao2006SSOD, Xue2010FRSSOD, Deng2016SSOD-AFW} rely on clustering techniques like K-means, combining limited labeled data with unlabeled data and applying penalty mechanisms to improve detection accuracy. As data complexity grows, distance-based unsupervised detectors are adapted to incorporate labeled data. For instance, \citet{Daneshpazhouh2014EODSP} introduce an entropy-based method to identify outliers by calculating distances between normal points and known outliers. Similarly, \citet{Ienco2016SAnDCat} assume distinct feature combinations between normal and abnormal instances and detect outliers in nominal data by analyzing distance differences.
    Machine learning-based methods \cite{Zhao2018XGBOD, Pang2018RAMODO, Pang2019DevNet} enhance feature extraction and nonlinear relationship modeling through techniques such as representation learning, ensemble learning, and neural networks. Specifically, 
    \citet{Pang2019DevNet} propose an end-to-end framework that uses a nonlinear mapping to convert raw features into anomaly scores. These scores are then compared to a Gaussian prior distribution, and a contrastive loss is used to minimize deviation for normal samples.
    \citet{Ruff2020DeepSAD} extend the DeepSVDD method with neural networks, incorporating labeled outliers to improve detection.
    Recently, \citet{Zhou2022FEAWAD} use a deep autoencoder for semi-supervised reconstruction of normal samples, where reconstructed features, differences, and errors yield anomaly scores, thereby increasing detection accuracy and generalization ability. Additionally, \citet{Pang2019PReNet} introduce relation neural networks and ordinal regression to learn pairwise sample relationships, 
    moving away from traditional assumptions about anomaly score distributions and enhancing model generalization on unlabeled data. As SSOD evolves, deep learning frameworks are becoming increasingly sophisticated and diverse, particularly for high-dimensional, complex data.

{\color{blue}
\subsection{Outlier detection with Fuzzy rough sets}
Fuzzy rough set (FRS) theory extends classical rough set theory by incorporating fuzziness, allowing for the approximation of uncertain concepts through indiscernibility relations among objects. This flexibility has led to its increasing adoption in outlier detection, where it facilitates the identification of irregular patterns across a variety of data types \cite{Chen2010Neighborhood, YUAN2018243, wang2021outlier, yuan2022FRGOD, Chen2024MFIOD}. 
For instance, \citet{Chen2010Neighborhood} develope a neighborhood-based outlier detection algorithm, while \citet{YUAN2018243} introduce a framework that utilizes neighborhood information entropy for detecting outliers in mixed-attribute datasets. Building on this, \citet{wang2021outlier} enhance this approach  with a weighting network model.
More recently, fuzzy rough granule-based detection methods for mixed data have gained attention, such as those proposed by \citet{yuan2022FRGOD} and \citet{Chen2024MFIOD}. However, a limitation of these existing methods is their inability to incorporate labeled data to enhance detection performance. In this work, we build upon fuzzy rough set-based techniques by leveraging a minimal number of labeled data, which helps in reducing the false positive rate  and improving the overall accuracy of outlier detection.}
    
\section{Preliminaries}
    In fuzzy rough sets, an information table $(O, A)$ with a set of objects $O$ and a set of attributes $A$ is expressed as a fuzzy information system. When the attribute set $A$ is divided into $C \cup \{d\}$, where $C$ represents the conditional attributes and $d$ denotes the decision attribute, and $C\cap \{d\}=\emptyset$, this fuzzy information system is also called a fuzzy decision system (FDS).
    The decision attribute $d$ is the target variable that we aim to predict or classify, while the conditional attributes $C$ are the features used to make predictions. The fuzzy decision system leverages the relationships between $C$ and $d$ to derive decision rules, aiding in decision-making processes under uncertainty.

    Given a FDS $(O, C \cup \{d\})$ with $n$ objects, each subset $B\subseteq C$ induces a fuzzy similarity relation $\widetilde{B}$ which is usually stored in a matrix $M_{\widetilde{B}}=\left(r_{ij}^{\widetilde{B}}\right)_{n\times n}$, where $r_{ij}^{\widetilde{B}}=\widetilde{B}(o_i,o_j)$ represents the degree to which object $o_i$ has a relation $\widetilde{B}$ with object $o_j$. The tuple $(O,\widetilde{B})$ is referred to as a fuzzy approximation space \cite{dubois1990rough}, where the approximations of fuzzy sets can be derived.

\begin{definition}\label{def_fuzzy_appr}\cite{chen2011granular}
    Let $(O,\widetilde{B})$ be a fuzzy approximation space, and $\widetilde{X}$ is a fuzzy set on $O$, the lower and the upper approximation of $\widetilde{X}$ with respect to the fuzzy relation $\widetilde{B}$ are a pair of fuzzy sets whose membership function respectively are
    \begin{equation}\label{eq:fuzzy_appr}
    \begin{split}
        \widetilde{B}_*\widetilde{X}(o_i)&=\underset{o\in O}{\mathop{\inf }}\,\max \bigg\{1-\widetilde{B}(o_i,o), \widetilde{X}(o)\bigg\}\\
        \widetilde{B}^*\widetilde{X}(o_i)&=\underset{o\in O}{\mathop{\sup }}\,\min \bigg\{\widetilde{B}(o_i,o), \widetilde{X}(o)\bigg\}       
        \end{split}
        \end{equation}
        \end{definition}
    
    The fuzzy lower approximation $\widetilde{B}_*\widetilde{X}$ expresses the degree of objects certainly belonging to $\widetilde{X}$, and the fuzzy upper approximation $\widetilde{B}^*\widetilde{X}$ indicates the degree of elements possibly belonging to $\widetilde{X}$. Then, the fuzzy approximation accuracy is introduced to express the ``quality" of the two approximations.
    \begin{definition}\cite{yuan2022FRGOD}
        Given a fuzzy set $\widetilde{X}$ and a fuzzy similarity relation $\widetilde{B}$, the fuzzy approximation accuracy of $\widetilde{X}$ by $\widetilde{B}$ is
        \begin{equation}\label{eq:faa_x}
            \alpha_{\widetilde{B}}(\widetilde{X})=\frac{|{\widetilde{B}_*\widetilde{X}}|}
                                            {|{\widetilde{B}^*\widetilde{X}}|}.
            \end{equation}
            \end{definition}
            
    Given a fuzzy approximation space $(O, \widetilde{B})$, the fuzzy similarity relation $\widetilde{B}$ can induce a generalized fuzzy partition of $O$, i.e., a set of fuzzy similarity classes (also called knowledge classes). 
    \begin{definition}\label{def_partition}
        The fuzzy generalized partition of $O$ induced by a fuzzy similarity relation $\widetilde{B}$ is
        \begin{equation}
        O/{\widetilde{B}}=\left\{[o_i]_{\widetilde{B}}\right\}_{o_i\in O},
        \end{equation}
        where $[o_i]_{\widetilde{B}}=\left(r_{i1}^{\widetilde{B}}, r_{i2}^{\widetilde{B}}, \dots, r_{in}^{\widetilde{B}}\right)$ is a fuzzy similarity class centered at $o_i$. 
        \end{definition}
        
    The fuzzy similarity class $[o_i]_{\widetilde{B}}$ reflects the similarity of $o_i$ to all objects in $O$. 
    Obviously, $[o_i]_{\widetilde{B}}(o_j)=r^{\widetilde{B}}_{ij}$. If $r^{\widetilde{B}}_{ij}=1$, then it suggests that $o_j$ certainly belongs to $[o_i]_{\widetilde{B}}$; If $r^{\widetilde{B}}_{ij}=0$, then $o_j$ definitely does not belong to $[o_i]_{\widetilde{B}}$. The fuzzy cardinality of $[o_i]_{\widetilde{B}}$ is computed by $\big|[o_i]_{\widetilde{B}}\big|=\sum\limits_{j=1}^{n}{r^{\widetilde{B}}_{ij}}$.
    It is clear that $1\le \big|[o_i]_{\widetilde{B}}\big|\le n$.

\section{Methodology}
    This section presents {\color{blue}the proposed method} in detail. The overall detection procedure is illustrated in Fig. \ref{fig_FROD}.

\begin{figure*}[!h]
    \centering
    \includegraphics[width=0.95\textwidth]{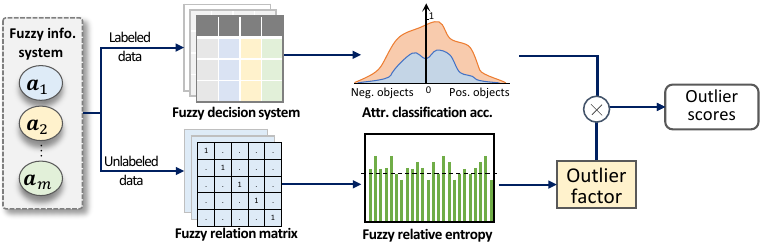}
    \caption{\color{blue}Overall structure of FROD. The method utilizes unlabeled data to calculate outlier factors based on fuzzy relative entropy. Then, a minimal number of labeled data is used to builed fuzzy decision systems, which enable to assess the contribution of attributes through attribute classification accuracy. Objects with consistently high outlier factors and significant attribute classification accuracy are identified as outliers.}
    \label{fig_FROD}
    \end{figure*}
    
\subsection{Attribute classification accuracy}\label{seq_aca}
    In semi-supervised settings, a small number of labeled objects are available. It is feasible to construct fuzzy decision systems (FDS) with features as conditional attributes and the class label as the decision attribute. 
    {\color{blue}Existing FRS-based approaches, such as fuzzy approximation accuracy \cite{yuan2022FRGOD} or classification error \cite{Wang2022Error}, evaluate the classification accuracy of conditional attributes. However, these methods do not fully leverage the available labeled information to exploit the potential of attribute sets in generating more separable classifications. To address this limitation, we propose a new measure of attribute classification accuracy that more effectively assesses the contribution of attribute sets to knowledge classification, thereby improving the reliability of outlier detection.}
    
    Let $O'=\{o_1,o_2,...\, o_{n'}\}$ be the set of objects with class labels, $C=\{c_1,c_2,..., c_{m}\}$ is the set of conditional attributes, where $c_k$ is the $k$-th attribute, and $d$ is the decision attribute, then $(O', C\cup \{d\})$ is a fuzzy decision system, based on which the fuzzy approximation accuracy \cite{yuan2022FRGOD} is defined.    
    \begin{definition}\label{def_faa}
        The fuzzy approximation accuracy (FAA) $\alpha_{C}(d)$ of the decision attribute $d$ by the conditional attribute $C$ is
        \begin{equation}\label{eq:faa}
            \alpha_{C}(d)=\frac{\left|\bigcup_{\widetilde{X}\in O'/\widetilde{d}}\widetilde{C}_*\widetilde{X}\right|}
            {\sum_{\widetilde{X}\in O'/\widetilde{d}}{\left|\widetilde{C}^*\widetilde{X}\right|}}.
            \end{equation}
            where $O'/\widetilde{d}$ is the fuzzy partition generated by the fuzzy similarity relation induced by $d$. 
            \end{definition}
            
    From the above definition, We can obtain that $0<\alpha_{C}(d)<=1$. 
    FAA describes the degrees to which all objects in $O'$ can be partitioned into the knowledge class induced by $d$ with respect to (w.r.t) $C$. 
    $\alpha_{C}(d)=1$ means that the knowledge class can be approximated precisely by the attribute set $C$. 
    In other words, FAA measures the capability of condition attributes to characterize the decision attribute. 
    Particularly, in outlier detection, all objects in $O'$ are categorized into two imbalanced crisp classes by the class label as 
    \begin{equation}\label{eq:partition_d}
            O'/d =\bigg\{[o^-]_{d}, [o^+]_{d} \bigg\},
            \end{equation}
    where $[o^-]_{d}$ and $[o^+]_{d}$ denote the normal class and the abnormal class (outliers) in $O'$, respectively. 
    Given an attribute subset $B\subseteq C$, the FAA of $B$ to $d$ is calculated as
    \begin{equation}\label{eq:faa_Bd}
        \alpha_{B}(d) 
        =\frac{\left|\widetilde{B}_*[o^-]_{d}\cup \widetilde{B}_*[o^+]_{d}\right|}{\left|\widetilde{B}^*[o^-]_{d}\right| + \left|\widetilde{B}^*[o^+]_{d}\right|}.
        \end{equation}

    As the numbers of positive instances and negative instances in outlier detection are generally imbalanced, we introduce a balancing parameter to control the weight of each class. 
\begin{definition}
    Let $(O', C \cup \{d\})$ be a FDS, $\forall B\subseteq C$, the attribute classification accuracy of the attribute set $B$ is
    \begin{equation}\label{eq_aca}
        \gamma_{B}=\frac{\left|\widetilde{B}_*[o^-]_{d}\right|}
        {\left|\widetilde{B}^*[o^-]_{d}\right|} + \beta \frac{\left|\widetilde{B}_*[o^+]_{d}\right|} {\left|\widetilde{B}^*[o^+]_{d}\right|},
        \end{equation}
    where $\beta$ is a control parameter to tune the weight of the two classes.
    \end{definition}
    
    This definition provides an effective measure of the contribution of an attribute set $B$ in knowledge classification, thus having the potential for outlier detection. {\color{blue}Figure~\ref{fig_aca} provides the illustration of the idea of attribute classification accuracy.}

    \begin{figure}
        \centering
        \includegraphics[width=6cm]{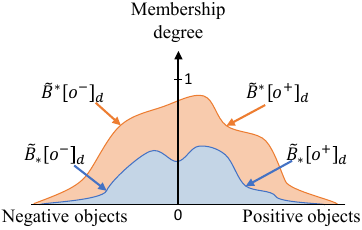}
        \caption{\color{blue}Illustration of attribute classification accuracy with respect to the attribute set $B$. The orange line indicates the membership degree of fuzzy upper approximation for each object, and the blue line denotes that of fuzzy lower approximation for each  object. The weighted propotion of blue area and orange area for negative objects and positive objects represents the attribute classification accuracy of $B$.}
        \label{fig_aca}
    \end{figure}
        
\subsection{Fuzzy relative entropy}
    This subsection first introduces fuzzy information entropy to measure information uncertainty within a fuzzy approximation space and then defines fuzzy relative entropy (FRE) to characterize outliers from unlabeled data. 
    Hu et al. \cite{hu2006fuzzy} generalized Shannon's information entropy to measure the information quantity implied in an attribute set or a fuzzy approximation space.
    \begin{definition}\label{def_FE}
        Given a fuzzy approximation space $(O, \widetilde{B})$, the fuzzy information entropy of the attribute set $B$ or the fuzzy similarity relation $\widetilde{B}$ is
        \begin{equation}\label{eq:FE}   
            FE({B})=FE(\widetilde{B})=-\frac{1}{|O|}\sum_{o_i \in O}\log_2{\frac{|[o_i]_{\widetilde{B}}|}{|O|}}.
        \end{equation}
        \end{definition}
    This definition forms a map: $FE: B \to \mathbb{R}^+$, which builds a foundation on which we can compare the uncertainty or randomness of a fuzzy approximation space.

    Since outliers are significantly different from the majority of a dataset. 
    The presence of outliers brings more possible outcomes to a system or, in other words, imposes more uncertainty or randomness on the system. 
    Therefore, uncertainty can be used to characterize outliers.
    To quantify this uncertainty, we introduce the fuzzy relative entropy (FRE) to measure the uncertainty associated with an object.
    \begin{definition}\label{def_FRE}       
        Given a fuzzy approximation space $(O, \widetilde{B})$, $\forall o_i\in O$ and $O_{\neg i}=O-\{o_i\}$, the fuzzy relative entropy $FRE_B(o_i)$ of $o_i$ on $B$ is
        \begin{equation}\label{eq:FRE}
            FRE_B(o_i)=\frac{FE_{\neg i}(B)}{FE(B)}+ \lambda,
        \end{equation}
        where $FE_{\neg i}(B)=-\frac{1}{|O_{\neg i}|}\sum_{o_j \in O_{\neg i}}\log_2{\frac{|[o_j]_{\widetilde{B}}|}{|O_{\neg i}|}}$ represents the fuzzy entropy after removing $o_i$ from $O$, $\lambda=\frac{1}{|O|}$ is a smoothing term.
        \end{definition}

    Clearly, the fuzzy relative entropy of $o_i$ is determined by comparing the relative change in the fuzzy entropy of the approximation space. 
    If the fuzzy entropy decreases significantly after removing $o_i$, then the uncertainty of $o_i$ w.r.t $B$ may be considered high; otherwise low. 
    Notably, the programming technique of memorization can be employed to avoid massive repeated calculations in computing FREs.
    Figure \ref{fig_ex} presents an illustrative example of FRE applied to a 2D dataset containing 100 normal points and 10 outliers. The results reveal that outliers exhibit significantly lower FRE values than normal points, highlighting the effectiveness of the proposed measure in distinguishing outliers. In the subsequent theorem, we will show that FRE inherently aligns with the intuitive understanding of uncertainty and outliers
    \begin{figure}[!h]
        \centering
        \begin{subfigure}[b]{0.44\linewidth}
            \includegraphics[height=4.6cm]{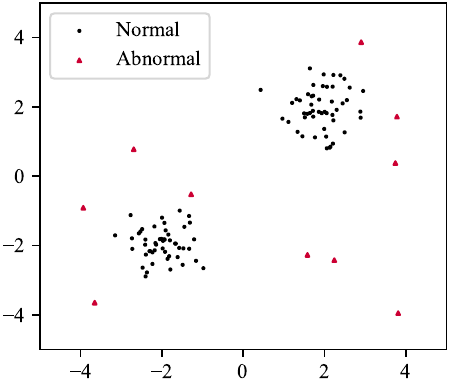}
            \caption{2D plot} \label{fig_ex_2D}
        \end{subfigure}
        \quad
        \begin{subfigure}[b]{0.44\linewidth}
            \includegraphics[height=4.6cm]{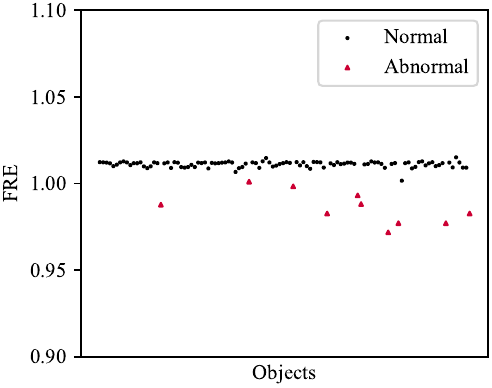}
            \caption{Fuzzy relative entropy} \label{fig_ex_FRE}
        \end{subfigure}
        \caption{An example with 100 normal points and 10 outliers: (a) 2D plot of the dataset. (b) Fuzzy relative entropy for each object.} \label{fig_ex}
        \end{figure}

\begin{theorem}\label{prop_p2}
    Let $O=O_1\cup \{o_k\}$ be the set of objects with the attributes $B$ such that $\forall o_i, o_j \in O_1$, $d_{ij}\leq \delta$ and $d_{ik}> \delta$. It holds that $FRE(o_i) > FRE(o_k)$.
    \end{theorem}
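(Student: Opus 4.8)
The plan is to peel off the parts of $FRE$ that are common to $o_i$ and $o_k$ and reduce everything to a comparison of the two ``leave-one-out'' entropies. First I would translate the metric hypotheses into statements about the similarity relation $\widetilde B$: because $\widetilde B$ is generated from the distance with radius $\delta$, the assumption $d_{ik}>\delta$ forces $r^{\widetilde B}_{ik}=\widetilde B(o_i,o_k)=0$ for every $o_i\in O_1$, while $d_{ij}\le\delta$ gives $r^{\widetilde B}_{ij}>0$ for $o_i,o_j\in O_1$. Writing $n=|O|$ and $C_j=|[o_j]_{\widetilde B}|=\sum_{l}r^{\widetilde B}_{jl}$ and using reflexivity ($r^{\widetilde B}_{jj}=1$), this pins down the two cardinalities that drive the whole argument: the outlier is fuzzily isolated, so $C_k=r^{\widetilde B}_{kk}=1$, whereas each normal point satisfies $C_i=1+\sum_{o_j\in O_1\setminus\{o_i\}}r^{\widetilde B}_{ij}>1$ (assuming the nondegenerate case $|O_1|\ge 2$).

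Next I would dispose of the common factors. By Definition~\ref{def_FRE},
\begin{equation*}
    FRE(o_i)-FRE(o_k)=\frac{FE_{\neg i}(B)-FE_{\neg k}(B)}{FE(B)},
\end{equation*}
so once I verify $FE(B)>0$ the sign is governed entirely by $FE_{\neg i}(B)-FE_{\neg k}(B)$. Positivity of $FE(B)$ is immediate from $FE(B)=\log_2 n-\tfrac1n\sum_{j}\log_2 C_j$ together with $C_k=1<n$, which keeps the sum strictly below $n\log_2 n$.

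The heart of the proof is evaluating $FE_{\neg i}(B)-FE_{\neg k}(B)$. Rewriting each leave-one-out entropy as $\log_2(n-1)-\tfrac1{n-1}\sum_{j\neq\cdot}\log_2 C_j$, the $\log_2(n-1)$ terms cancel and the two sums differ only in the single omitted index, giving
\begin{equation*}
    FE_{\neg i}(B)-FE_{\neg k}(B)=\frac{\log_2 C_i-\log_2 C_k}{n-1}=\frac{\log_2 C_i}{n-1}>0,
\end{equation*}
since $C_k=1$ and $C_i>1$. The intuition behind this sign is that the isolated point $o_k$ contributes the maximal per-object surprise $\log_2(n/C_k)=\log_2 n$ to the entropy, so deleting it lowers the entropy more than deleting the well-connected $o_i$; hence $o_k$ ends up with the smaller relative entropy. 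Combining this with the reduction above yields $FRE(o_i)>FRE(o_k)$.

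The obstacle I expect is interpretive rather than computational: the definition of $FE_{\neg i}(B)$ is slightly ambiguous about whether the cardinalities $|[o_j]_{\widetilde B}|$ are recomputed in the reduced universe $O_{\neg i}$ or inherited from $O$. If they are recomputed, deleting $o_i$ additionally shrinks each neighbor's cardinality by $r^{\widetilde B}_{ji}$, and the clean identity is replaced by
\begin{equation*}
    FE_{\neg i}(B)-FE_{\neg k}(B)=\frac{1}{n-1}\left(\log_2 C_i+\sum_{o_j\in O_1\setminus\{o_i\}}\log_2\frac{C_j}{C_j-r^{\widetilde B}_{ji}}\right),
\end{equation*}
which is again a sum of a strictly positive term and nonnegative terms, hence positive. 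So both readings give the desired strict inequality; I would state the convention explicitly and then carry out whichever of the two short computations matches it.
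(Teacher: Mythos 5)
Your proof is correct and follows essentially the same route as the paper's: both arguments reduce the claim to comparing the two leave-one-out entropies $FE_{\neg i}(B)$ and $FE_{\neg k}(B)$ and exploit exactly the facts $r_{lk}=0$, $r_{li}>0$, $|[o_k]|=1<|[o_i]|$; the paper uses your second (recomputed-in-the-reduced-universe) convention, and your explicit difference formula is just a term-by-term rearrangement of its summand-by-summand comparison. The only things you add are the explicit check that $FE(B)>0$ before dividing --- a step the paper silently assumes --- and the handling of both readings of $FE_{\neg i}$, which are welcome but do not change the argument.
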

    
\begin{proof}
    Let $O_2=O-\{o_i\}$ be the subset obtained by removing the object $o_i$ from $O$, and let $O_3=O_1-\{o_i\}$ be the subset obtained by removing $o_i$ from $O_1$. Since $O=O_1\cup \{o_k\}$, we have $O_3=O_2-\{o_k\}$. By Definition \ref{def_FE}, we have
    \begin{align}\label{eq_p2_1}
    \begin{split}
        FE^{(O_1)}(B) & =FE^{\big(O_3\cup \{o_i\}\big)}(B)=-\frac{1}{|O_1|}\sum_{o_l\in O_3}\log_2\frac{1}{|O_1|}\left|[o_l]^{(O_1)}\right|-\frac{1}{|O_1|}\log_2\frac{1}{|O_1|}\left|[o_i]^{(O_1)}\right|,\\
        FE^{(O_2)}(B) & = FE^{\big(O_3\cup \{o_k\}\big)}(B)=-\frac{1}{|O_2|}\sum_{o_l\in O_3}\log_2\frac{1}{|O_2|}\left|[o_l]^{(O_2)}\right|-\frac{1}{|O_2|}\log_2\frac{1}{|O_2|}\left|[o_k]^{(O_2)}\right|.
        \end{split}
        \end{align}

    Given that $\forall o_i,o_j\in O_1, d_{ij} \leq \delta$ and $d_{ik}>\delta$, 
    we obtain $r_{ij}=1-\delta>0$ and $r_{ik}=0$. Therefore, $\forall o_l\in O_3$, we have $r_{lk}=0$, $r_{li}>0$ and
    \begin{align}
    \begin{split}
        \left|[o_l]^{(O_1)}\right| &= \sum_{o_j\in O_3}r_{lj}+r_{li},\\
        \left|[o_l]^{(O_2)}\right| &= \sum_{o_j\in O_3}r_{lj}+r_{lk}=\sum_{o_j\in O_3}r_{lj}.
        \end{split}
        \end{align}    

    Therefore, we obtain
    \begin{equation}
        \left|[o_l]^{(O_1)}\right|>\left|[o_l]^{(O_2)}\right|.
        \end{equation}
    
    Taking the logarithm and summing all $o_l$ in $O_3$, we get
    \begin{equation}\label{eq_p2_2}
        \sum_{o_l\in O_3}\log_2\frac{1}{|O_1|}\left|[o_l]^{(O_1)}\right| >
        \sum_{o_l\in O_3}\log_2\frac{1}{|O_2|}\left|[o_l]^{(O_2)}\right|.
        \end{equation}

    Since $\left|[o_i]^{(O_1)}\right|=\sum_{o_l\in O_3}r_{li}+1$ and $\left|[o_k]^{(O_2)}\right|=\sum_{o_l\in O_3}r_{lk}+1=1$, we have
    \begin{equation}\label{eq_p2_3}
        \left|[o_i]^{(O_1)}\right|>\left|[o_k]^{(O_2)}\right|.
        \end{equation}

    Combining Eqs.(\ref{eq_p2_1}), (\ref{eq_p2_2}) and (\ref{eq_p2_3}), we have $FE^{(O_1)}(B)<FE^{(O_2)}(B)$. Therefore
    \[FE^{(O)}_{\neg k}(B)=FE^{(O_1)}(B)<FE^{(O_2)}(B)=FE^{(O)}_{\neg i}(B).\]
    
    Therefore, $\frac{FE_{\neg i}^{(O)}(B)}{FE^{(O)}(B)}+\lambda>\frac{FE_{\neg k}^{(O)}(B)}{FE^{(O)}(B)}+\lambda$.
    By Definition \ref{def_FRE}, we establish $FRE(o_i) > FRE(o_k)$.
\end{proof}

Theorem \ref{prop_p2} provides a theoretical foundation for the behavior of FRE in identifying outliers by leveraging the relative changes in fuzzy entropy. Specifically, it demonstrates that the FRE of an object $o_i$ within a dense region $O_1$ is greater than that of a distant object $o_k$, where the dense region is characterized by objects with small inter-object distances ($d_{ij}\leq \delta$), and $o_k$ lies outside this region ($d_{ik} > \delta$).
This result reflects that objects within a sparse region contribute significantly to the change in fuzzy entropy. Consequently, removing such objects leads to a substantial reduction in entropy, resulting in lower FRE values.
The theorem underscores the effectiveness of FRE in quantifying uncertainty and characterizing outliers, establishing a robust theoretical basis for its application in outlier detection.

\subsection{Detection algorithm}
    In this subsection, we first introduce data representation with a hybrid fuzzy membership function and then define the fuzzy relative entropy to compute the outlier factor for every instance. 
    Subsequently, we combine attribute classification accuracy with these outlier factors to calculate the outlier scores for objects in the dataset. Finally, we design a thresholding method using labeled data to generate binary predictions on outliers.
    
\subsubsection{Data representation}
    As real-world data usually include mixed attributes, this work focuses on three types of values: numerical data, nominal data, and a mix of the two types.
    The numerical values are first transferred into $[0, 1]$ using the min-max normalization to maintain a uniform scale across all features.
    Then, we employ a hybrid fuzzy membership function \cite{yuan2022FRGOD} to represent the membership degree of objects that have multiple types of values. 
    
    Let $f_i^k$ be the value of the object $o_i$'s attribute $c_k$, and $d_{ij}^k={\left|f_i^k- f_j^k\right|}$ is the difference value between $o_i$ and $o_j$ on the attribute $c_k$, the membership degree $\widetilde{c}_k(o_i,o_j)$ of the fuzzy similarity relation $\widetilde{c}_k$ between $o_i$ and $o_j$ is determined by
    \begin{equation}\label{eq:rel_attr}
        \widetilde{c}_k(o_i,o_j)=\left\{
        \begin{array}{ll}
        \mathbb{I}_{f_i^k = f_j^k}, & \text{if } c_k \text{ is nominal};\\
        1-d_{ij}^k,   & \text{if } d_{ij}^k \le r^k, c_k \text{ is numerical};\\
        0,          & \text{if } d_{ij}^k > r^k, c_k \text{ is numerical};\\
        \end{array}\right.
        \end{equation}
    where $\mathbb{I}_{(\cdot)}$ represents an indicator function that returns 1 if the condition is met; otherwise, it returns 0. ${r^k}$ indicates the fuzzy radius for the attribute $c_k$ computed by
    \begin{equation}
        r^k=\delta \cdot ave(d_{ij}^k, c_k).
        \end{equation}
    where $ave(d_{ij}^k, c_k)$ is the average of the difference value $d_{ij}^k$ of each object pair $(o_i, o_j)$ in $O$ on the attribute $c_k$, and $\delta$ is an adjustable parameter. This setting ensures the fuzzy radius $r^k$ is adaptively determined based on the specific characteristics of each attribute, thus effectively handling variations and outliers within the data.

\subsubsection{Fuzzy relative entropy-based outlier factor}
    As the fuzzy relative entropy reflects the anomaly degree of an object, we may consider those objects in $O$ whose fuzzy relative entropies are always lower when compared with other objects and use the information contained within the relative entropy to identify outliers. Therefore, we can compute an outlier factor for each object based on fuzzy relative entropy from the unlabeled data. 
    \begin{definition}\label{def_OF}
         Let $(O,C\cup \{d\})$ be a FDS, $\forall {B}\subseteq C$, the outlier factor $OF(o_i)$ of the object $o_i\in O$ w.r.t the attribute set $B$ is
        \begin{equation}\label{eq:OF}
            OF_B(o_i)=W_B(o_i)\cdot FRE_B(o_i),
            \end{equation}
        where the weight function $W_B(o_i)=
                \sqrt{\frac{1}{|O|}\left|[o_i]_{\widetilde{B}}\right|}$.
            \end{definition}
        
    In the above definition, the lower $FRE_B(o_i)$ of $o_i$, the higher uncertainty of $o_i$ and the more likely $o_i$ is an outlier. 
    In addition, as the partition power of the attribute set $B$ is important for knowledge classification, the fuzzy approximation accuracy defined in the previous section is leveraged into the construction of outliers.

\subsubsection{Outlier scoring}
    Since an attribute set $B$ with a higher classification accuracy is more useful in outlier detection (see Section \ref{seq_aca}), this subsection utilizes attribute classification accuracy to guide the process of outlier scoring. The primary problem is to specify the attribute sets $B$. A common method \cite{Goldstein2012HBOS, Li2022ECOD, Yuan2023WFRDA} is to assume the independence of each attribute. Under this assumption, we assess the outlier degree for each object by combining the outlier factors from each attribute with their respective classification accuracy. This integration is obtained through a weighted summation as follows.

    \begin{definition}\label{def_OD}
         Let $(O,C\cup \{d\})$ be a FDS, $\forall o_i\in O$, the fuzzy relative entropy-based outlier degree of $o_i$ is
        \begin{equation}\label{eq:OD}
            OD(o_i)=1-\frac{1}{|C|}\sum\limits_{c_k\in C}{\gamma_{c_k} \cdot OF_{c_k}(o_i)}.
            \end{equation}
            \end{definition}
 
    In order to output binary prediction, we set an adaptive threshold parameter to detect outliers in unlabeled data.
    \begin{definition}\label{def_FROD}
        Let $\theta \in (0,1)$ be a real-valued threshold, $\forall o_i \in O$, if \textit{OD}$(o_i) > \theta$, then $o_i$ is regarded as an outlier.
        \end{definition}

    In the above definition, the optimal value of $\theta$ is adaptively determined. 
    A simple and practical way is to adopt the greatest outlier degree of normal instances in the labeled data.
    Given a set of labeled objects $O'=\{O^-,O^+\}$, where $O^-$ indicates normal data and $O^+$ represents outliers, the threshold $\theta$ is computed as
    \begin{equation}\label{eq_theta}
        \theta= \max_{o_i\in O^-} {OD}(o_i).
    \end{equation}

    Finally, the whole procedure of FROD is illustrated in Algorithm \ref{alg_FROD}. 
    Since FROD requires calculating the fuzzy relative entropy of each object on each conditional attribute, the time complexity for FROD is $O(mn)$, where $m$ denotes the number of attributes and $n$ represents the number of objects. As FROD stores a fuzzy relation matrix for each attribute, its space complexity is $O(mn^2)$.

\begin{algorithm}[!h]
    \caption{FROD algorithm}\label{alg_FROD}
    \LinesNumbered
    \KwIn{An FDS $(O, C \cup \{d\})$  with a small subset of labeled objects $O'=\{O^-, O^+\}\subseteq O$, where $O^-$ indicates normal data and $O^+$ represents outliers}
    \KwOut{Outlier set $OS$}
    $OS = \emptyset$\;
    \For {each $c_k\in C$}{
        Construct fuzzy relation ${\widetilde{c}_k'}$ {\color{blue}using the labeled data $O'$} by Eq.~(\ref{eq:rel_attr})\; 
        Compute attribute classification accuracy $\gamma_{c_k}$ by Eq.~(\ref{eq_aca})\;

        Construct fuzzy relation ${\widetilde{c}_k}$ {\color{blue}using the unlabeled data in $O-O'$}\;
        Compute fuzzy entropy $FE(c_k)$ by Eq.~(\ref{eq:FE})\;
        \For{{\color{blue}each} $o_i\in O-O'$}{
            Compute fuzzy relative entropy $FRE_{c_k}(o_i)$ by Eq.~(\ref{eq:FRE})\;
            Compute outlier factor $OF_{c_k}(o_i)$ by Eq.~(\ref{eq:OF})\;
            }
        }
    \For{each $o_i\in O-O'$}{
        Compute outlier degree $OD(o_i)$ by Eq.~(\ref{eq:OD})\;
        }
    Compute outlier threshold $\theta$ by Eq.~(\ref{eq_theta})\;
    \For{each $o_i\in O-O'$}{
        \If {$OD(o_i)>\theta$}{
            $OS = OS \cup \{o_i\}$\;
            }
        }
    \Return $OS$.
    \end{algorithm}
    
    \begin{table}[!h]
        \caption{An example of partially labeled data table}\label{Mixed_DT}
        \centering
        \begin{tabular}{ccccc}
        \toprule
        \multirow{2}{*}{Objects} & \multicolumn{3}{c}{Conditional attributes} & Decision attribute  \\
        \cmidrule{2-5}
                                 & $c_1$         & $c_2$        & $c_3$        & $d$                \\
        \midrule
        $o_1$                     & 0.53         & 7            & C            & Outlier            \\
        $o_2$                     & 0.48         & 8            & C            & Normal             \\
        $o_3$                     & 0.50         & 7            & B            & Normal             \\
        $o_4$                     & 0.48         & 8            & B            & Normal             \\
        $o_5$                     & 0.51         & 8            & B            & Normal             \\
        $o_6$                     & 0.52         & 7            & C            & Not available      \\
        $o_7$                     & 0.48         & 9            & A            & Not available      \\
        $o_8$                     & 0.47         & 8            & A            & Not available      \\
        $o_9$                     & 0.53         & 9            & A            & Not available      \\
        $o_{10}$                  & 0.48         & 9            & B            & Not available      \\
        \bottomrule
        \end{tabular}
        \end{table}
        
\subsection{Example}
    A concrete example is given in this part to show how the proposed algorithm works. Note that only the basic concepts are demonstrated here.
    A partially labeled data table is shown in Table \ref{Mixed_DT}, where $O=\{o_1,o_2,...,o_{10}\}$ is the set of objects, $C=\{c_1,c_2,c_3\}$ is the conditional attribute set containing a real-valued, an integer-valued and a categorical attribute, $d$ is the decision attribute with 5 labeled values and 5 unlabeled ones.

    We first apply min-max normalization to all numerical values and then use the labeled data (i.e., $O'=\{o_1,o_2,...,o_5\}$) to construct a fuzzy decision system. Let the fuzzy radius parameter be $\delta=1$, then we have the fuzzy radius $r^1=0.3467$, $r^2=0.24$, and the resulting fuzzy similarity relations induced by the conditional attributes are as\\
    \begin{small}
    $M'_{\widetilde{c}_1}=\left(\begin{array}{ccccc}
            1&0&0&0&0.667\\
            0&1&0.667&1&0\\
            0&0.667&1&0.667&0.833\\
            0&1&0.667&1&0\\
            0.667&0&0.833&0&1\\
      \end{array}\right)$,
    $M'_{\widetilde{c}_2}=\left(\begin{array}{ccccc}
            1&0&1&0&0\\
            0&1&0&1&1\\
            1&0&1&0&0\\
            0&1&0&1&1\\
            0&1&0&1&1\\
        \end{array}\right)$,
    $M'_{\widetilde{c}_3}=\left(\begin{array}{ccccc}
            1&1&0&0&0\\
            1&1&0&0&0\\
            0&0&1&1&1\\
            0&0&1&1&1\\
            0&0&1&1&1\\
        \end{array}\right)$.
    \end{small}
        
    The crisp equivalence class $d$ can be generated by the labeled decision attribute: $[o^-]_d=(0,1,1,1,1)$,$[o^+]_d=(1,0,0,0,0)$. Let $\beta=1$, the attribute classification accuracy of each conditional attribute is computed by Eq.~(\ref{eq_aca}) as 
  
        $\gamma_{c_1}=\frac{\left|\widetilde{c}_{1*}[o^-]_{d}\right|}{\left|\widetilde{c}_1^*[o^-]_{d}\right|}
        + \beta \frac{\left|\widetilde{c}_{1*}[o^+]_{d}\right|} {\left|\widetilde{c}_1^*[o^+]_{d}\right|}
        =\frac{0.333}{1.667} + \frac{3.333} {4.667}
        \approx 0.914$,
        
        $\gamma_{c_2}==\frac{0}{2}+\frac{3}{5}=0.6$, $\gamma_{c_3}=\frac{0}{2}+\frac{3}{5}=0.6$.

    Next, we use the unlabeled data (i.e., $O-O'=\{o_6,o_7,$ $...,o_{10}\}$) to construct the fuzzy similarity relations as\\
        \begin{small}
		$M_{\widetilde{c}_1}=\left(\begin{array}{ccccc}
                1&0&0&0.833&0\\
                0&1&0.833&0&1\\
                0&0.833&1&0&0.833\\
                0.833&0&0&1&0\\
                0&1&0.833&0&1\\
          \end{array}\right)$,
		$M_{\widetilde{c}_2}=\left(\begin{array}{ccccc}
                1&0&0&0&0\\
                0&1&0&1&1\\
                0&0&1&0&0\\
                0&1&0&1&1\\
                0&1&0&1&1\\
            \end{array}\right)$,
		$M_{\widetilde{c}_3}=\left(\begin{array}{ccccc}
                1&0&0&0&0\\
                0&1&1&1&0\\
                0&1&1&1&0\\
                0&1&1&1&0\\
                0&0&0&0&1\\
            \end{array}\right)$.
        \end{small}

    By Definition \ref{def_FE}, the fuzzy entropy is computed as\\
    $FE(c_1)=-\frac{1}{5}\big(\log_2{\frac{1.833}{5}}+\log_2{\frac{2.833}{5}}+\log_2{\frac{2.666}{5}}+\log_2{\frac{1.833}{5}}+\log_2{\frac{2.833}{5}}\big)\approx 1.088$. 
    Similarly, we have $FE_{\neg 6}(c_1)\approx 0.895$.
    
    By Definition \ref{def_FRE}, the fuzzy relative entropy of $o_6$ w.r.t $c_1$ is\\
    $FRE_{c_1}(o_6)=\frac{FE_{\neg 6}(c_1)}{FE(c_1)}+ \lambda=\frac{0.895}{1.088}+\frac{1}{5}\approx 1.023$.
    
    Then, we obtain the outlier factor of $o_6$ on $c_1$ as
    $OF_{c_1}\left(o_6\right) =W_{c_1}(o_6)\cdot FRE_{c_1}(o_6)=\sqrt{1.8333/5}\times 1.023=0.619$.
    Similarly, $OF_{c_2}\left(o_6\right)=OF_{c_3}\left(o_6\right) \approx 0.3542$.
    
    Therefore, the outlier score for each unlabeled object is computed as\\ 
    $OD(o_6)=1-\frac{1}{|C|}\sum_{c_k}{\gamma_{c_k} \cdot OF_{c_k}(o_6)}=1-\frac{1}{3}\big(0.914\times 0.619+0.6\times 0.3542+0.6\times 0.3542\big)\approx 0.670$.
    Similarly, we have $OD(o_7)\approx 0.316$, $OD(o_8)\approx 0.467$, $OD(o_9)\approx 0.410$, $OD(o_{10})\approx 0.446$.
    
    Comparing the outlier degrees of each object, it can be seen that the outlier degree of $o_6$ is significantly higher than the others. Let the threshold $\theta$ be 0.6, then the object $o_6$ is regarded as an outlier.

\section{Experiments}
    This section tries to empirically investigate the following questions:
    \begin{itemize}[\indent]
        \item \textbf{Detection performances in real-world scenarios.} Can FROD generalize from a limited number of labeled data to effectively detect outliers in real-world datasets?
        \item \textbf{Effectiveness with mixed-attribute data}. Can FROD make better use of mixed-attribute data?
        \item \textbf{Detection performances with various amounts of labeled data.} How effective is FROD in utilizing labeled data for guiding outliers detection?
        \item \textbf{Statistical significance analysis.} Whether there are statistically significant differences between FROD and comparison methods?
    \end{itemize}

\subsection{Experimental settings}
    We select 16 real-world datasets that cover various fields,s including healthcare, image, botany, etc., to assess the performances of detectors. The details of the datasets are presented in Table \ref{datasets}. For semi-supervised methods, we randomly select 1\% of labeled data to train the models and use the remaining to test the detectors. Following previous works \cite{han2022ADbench, Jiang2023WSAD}, we adapt the unsupervised algorithms for predicting the new instances, i.e., 1\% of labeled data are used to tune parameters, and the rest for testing. 
    {\color{blue}For the proposed method, the parameters $\delta$, $\beta$ were tuned using grid search with the labeled data which are available in the semi-supervised setting. Specifically, $\delta$ was varied from 0.1 to 3 in increments of 0.5, and $\beta$ was chosen from the set $\{0.01,0.1,1,10,100\}$. The combination of values that yielded the best performance was selected for final evaluation. }
    Each experiment is independently run 10 times, in each of which we employ stratified sampling to preserve a consistent proportion of outliers. Finally, the average results are reported.

\begin{table*}[!hbt]
    \centering \tabcolsep=2pt
    \caption{Details of the experimental datasets}\label{datasets}
    \begin{tabular}{ccccccc}
    \toprule
    No. & Dataset  & \# Object & \# Attribute & \# Outlier & Category  & Data type\\
    \midrule
    1  & Annthyroid \cite{han2022ADbench}  & 7200       & 6             & 534        & Healthcare  & Numerical   \\
    2  & Arrhythmia  \cite{Yuan2023WFRDA}  & 452        & 279           & 66         & Medical     & Mixed       \\
    3  & Cardiotocography\cite{han2022ADbench}& 2114    & 21            & 466        & Healthcare   & Numerical   \\
    4  & Ionosphere  \cite{han2022ADbench} & 351        & 32            & 126        & Oryctognosy & Numerical   \\
    5  & Mammography \cite{han2022ADbench} & 11183      & 6             & 260        & Healthcare  & Numerical   \\
    6  & Mushroom1   \cite{yuan2022FRGOD}  & 4429       & 22            & 221        & Botany      & Categorical \\
    7  & Mushroom2   \cite{Yuan2023WFRDA}  & 4781       & 22            & 573        & Botany      & Categorical \\
    8  & Musk        \cite{han2022ADbench} & 3062       & 166           & 97         & Chemistry   & Numerical   \\
    9  & Optdigits   \cite{han2022ADbench} & 5216       & 64            & 150        & Image       & Numerical   \\
    10 & PageBlocks  \cite{han2022ADbench} & 5393       & 10            & 510        & Document    & Numerical   \\
    11 & Pima        \cite{han2022ADbench} & 768        & 8             & 268        & Healthcare   & Numerical   \\
    12 & Satellite   \cite{han2022ADbench} & 6435       & 36            & 2036       & Astronautics & Numerical   \\
    13 & Satimage-2  \cite{han2022ADbench} & 5803       & 36            & 71         & Astronautics & Numerical   \\
    14 & Sick        \cite{Yuan2023WFRDA}  & 3613       & 29            & 72         & Medical      & Mixed       \\
    15 & SpamBase    \cite{han2022ADbench} & 4207       & 57            & 1679       & Document     & Numerical   \\
    16 & Thyroid     \cite{han2022ADbench} & 3772       & 6             & 93         & Healthcare   & Numerical  \\
    \bottomrule
    \end{tabular}
    \end{table*}
    
\subsubsection{Comparison methods}
    We choose the following 5 semi-supervised methods as well as 5 unsupervised approaches to compare with FROD.

\textbf{Unsupervised approaches:}
    \begin{itemize}
    \item IForest (2008) \cite{Liu2008IForest}: An ensemble model which constructs binary trees to isolate objects. The estimator parameter is selected in \{5, 10, 50, 100, 500\}.
    \item DeepSVDD (2018) \cite{Ruff2018DeepSVDD}: A deep learning-based detector that adopts one-class classification for outlier detection. The epoch number parameter is tuned from \{5, 10, 20, 50, 100, 200\}.
    \item ECOD (2022) \cite{Li2022ECOD}: A statistical method that employs the empirical cumulative distribution in a non-parametric fashion.
    \item MFGAD (2023) \cite{YUAN2023MFGAD}: An FRS-based method that introduces multi-fuzzy granules to compute outlier scores. The radius parameter is selected from 0.1 to 2.
    \item WFRDA (2023) \cite{Yuan2023WFRDA}: A FRS-based detector that defines fuzzy-rough density to detect outliers. The radius parameter is tuned from 0.1 to 2 with a stepsize of 0.1. 
    \end{itemize}

\textbf{Semi-supervised approaches:}
    \begin{itemize}
    \item REPEN (2018) \cite{Pang2018RAMODO}: A deep learning-based detector that learns feature representations by exploiting random distances. The margin in the loss function is set to 1000.
    \item DevNet (2019) \cite{Pang2019DevNet}: A deep learning-based detector that introduces the Gaussian prior and deviation loss to train models. The margin parameter is set to 5.
    \item DeepSAD (2020) \cite{Ruff2020DeepSAD}: A deep learning-based detector that extends DeepSVDD to leverage labeled data. The loss parameter $\eta$ is set to 1.
    \item FEAWAD (2022) \cite{Zhou2022FEAWAD}: A deep learning-based detector that employs autoencoders to predict outlier scores. The loss parameter $a$ is set to 5.
    \item PReNet (2023) \cite{Pang2019PReNet}: A deep learning-based detector that leverages pairwise sample relationships to identify outliers.
    \end{itemize}

\subsubsection{Evaluation metrics}
    We evaluate the comparison algorithms by two complementary metrics: AUC (Area Under the ROC Curve) and AP (Average Precision). 
    AUC emphasizes class separability, i.e., how well the algorithm is able to distinguish between different classes. While AP emphasizes the completeness of detection, i.e., how many relevant instances are retrieved by the detector. Note that AUC is insensitive to class imbalance, while AP is otherwise. 
    The two metrics are both computed based on the predicted outlier scores of objects. AUC summarizes a receiver operating characteristic (ROC) curve into a real number between 0 and 1. A higher AUC indicates a better performance.    
    Similarly, AP summarizes a precision-recall curve into a real number between 0 and 1. A greater AP value indicates a higher detection accuracy, and that more positive instances are retrieved.

\subsection{Experimental results and analysis}
\subsubsection{Detection performances in real-world scenarios}
    In real-world applications, there usually exists a large number of unlabeled data, and a small set of labeled data can be easily obtained. Therefore, we use only 1\% labeled data as the train set and the remaining 99\% as the test set to evaluate the detection approaches. 
    To ensure a fair comparison, we use the same labeled data as the semi-supervised models to adjust hyperparameters for the unsupervised algorithms, that is, using 1\% labeled data to test the unsupervised algorithms in a given range of parameters and selecting the one with the largest AUC score as the optimal parameter.

\begin{table*}[!h]
    \centering \tabcolsep=2pt
    \caption{AUC scores (\%) of detection algorithms (@1\% labeled data). The highest score is bolded, and the second one is underlined.}\label{AUC}
    \resizebox{\textwidth}{!}{
    \begin{tabular}{c|ccccc|cccccc}
    \toprule
    Dataset          & IForest     & DeepSVDD & ECOD           & MFGAD          & WFRDA          & DeepSAD & REPEN          & DevNet         & PReNet            & FEAWAD      & FROD            \\
    \midrule
    Annthyroid       & {\ul 87.54} & 71.79    & 78.89          & 72.03          & 65.85          & 62.47   & 62.48          & 68.78          & 73.89          & 80.78       & \textbf{96.30}  \\
    Arrhythmia       & 68.72       & 75.88    & 50.00          & 75.57          & \textbf{81.31} & 69.46   & 71.23          & 57.77          & 57.67          & 60.04       & {\ul 80.40}     \\
    Cardiotocography & 68.00       & 59.79    & 78.55          & 65.81          & 75.30          & 59.60   & 68.73          & 76.11          & {\ul 80.88}    & 79.96       & \textbf{81.07}  \\
    Ionosphere       & 61.74       & 77.85    & 72.84          & 47.92          & {\ul 78.52}    & 77.88   & 77.05          & 52.41          & 53.56          & 61.50       & \textbf{80.93}  \\
    Mammography      & 85.70       & 82.80    & \textbf{90.59} & 43.60          & 83.41          & 85.20   & 84.47          & 86.20          & 85.19          & 80.97       & {\ul 88.39}     \\
    Mushroom1        & 94.09       & 81.46    & 50.00          & \textbf{97.38} & {\ul 97.01}    & 86.59   & 89.31          & 88.93          & 89.75          & 93.01       & 95.06           \\
    Mushroom2        & 87.62       & 78.20    & 50.00          & 91.55          & 88.21          & 82.00   & 88.53          & \textbf{98.98} & {\ul 98.86}    & 95.70       & 98.50           \\
    Musk             & 89.36       & 81.13    & 95.59          & 98.92          & {\ul 99.98}    & 95.10   & 89.29          & 67.46          & 92.97          & 92.92       & \textbf{100.00} \\
    Optdigits        & 70.59       & 46.63    & 50.00          & 59.71          & 92.68          & 78.14   & 59.50          & 93.72          & \textbf{99.86} & {\ul 97.06} & 90.30           \\
    PageBlocks       & 87.39       & 87.08    & \textbf{91.40} & 39.44          & 84.88          & 48.52   & 55.80          & 75.70          & 70.56          & 81.11       & {\ul 90.79}     \\
    Pima             & 61.46       & 60.91    & 59.41          & 63.91          & \textbf{68.84} & 55.02   & 61.54          & 57.48          & 58.01          & 59.84       & {\ul 64.34}     \\
    Satellite        & 71.56       & 66.10    & 58.28          & 70.64          & 76.25          & 82.37   & 75.56          & 79.24          & \textbf{84.02} & {\ul 82.64} & 82.08           \\
    Satimage-2       & 96.32       & 96.15    & 96.55          & 96.22          & 96.93          & 81.70   & \textbf{99.14} & 81.57          & 96.84          & 81.83       & {\ul 97.82}     \\
    Sick             & 74.46       & 71.20    & 50.00          & 49.43          & {\ul 79.02}    & 68.40   & 74.92          & 62.28          & 66.44          & 69.76       & \textbf{86.43}  \\
    SpamBase         & 70.46       & 56.31    & 65.54          & 55.16          & 71.66          & 65.22   & 72.73          & 87.54          & \textbf{90.21} & 84.93       & {\ul 87.62}     \\
    Thyroid          & 96.75       & 92.89    & 97.70          & {\ul 98.55}    & 90.84          & 88.20   & 89.21          & 96.52          & 98.03          & 96.56       & \textbf{99.29}  \\
    \midrule
    Average          & 79.48       & 74.14    & 70.96          & 70.36          & {\ul 83.17}    & 74.12   & 76.22          & 76.92          & 81.05          & 81.16       & \textbf{88.71}  \\
    \bottomrule
    \end{tabular}}
    \end{table*}

    As seen from Table~\ref{AUC}, FROD performs best on 6 datasets, including Annthyroid, Cardiotocography, Ionosphere, Musk, Sick, and Thyroid. FROD also performs comparably to the best detector on 6 datasets, ranking second. Additionally, FROD obtains the highest average AUC over all 16 datasets. For instance, the average AUC of FROD is obviously higher than the best-unsupervised baseline WFRDA (5.54\% higher) and the best-semi-supervised competitor FEAWAD (81.16\% higher). This demonstrates FROD's effectiveness in detecting outliers with a very limited number of labeled data.

\begin{table*}[!h]
    \centering \tabcolsep=2pt
    \caption{AP scores (\%) of detection algorithms (@1\% labeled data). The highest score is bolded, and the second one is underlined.}\label{PR}
    \resizebox{\textwidth}{!}{
    \begin{tabular}{c|ccccc|cccccc}
    \toprule
    Dataset          & IForest & DeepSVDD       & ECOD  & MFGAD          & WFRDA          & DeepSAD & REPEN & DevNet         & PReNet            & FEAWAD      & FROD            \\
    \midrule
    Annthyroid       & 42.39   & 29.39          & 27.12 & 37.46          & 16.89          & 20.59   & 16.17 & 24.86          & 33.18          & {\ul 42.67} & \textbf{77.66}  \\
    Arrhythmia       & 29.70   & 46.07          & 14.54 & 35.27          & {\ul 46.43}    & 34.69   & 35.93 & 27.59          & 27.34          & 28.42       & \textbf{48.23}  \\
    Cardiotocography & 41.23   & 31.55          & 50.56 & 41.12          & 47.07          & 41.29   & 39.70 & 54.97          & \textbf{64.02} & {\ul 60.52} & 58.50           \\
    Ionosphere       & 46.98   & \textbf{71.05} & 64.53 & 47.48          & 66.79          & 67.57   & 66.62 & 46.61          & 48.26          & 55.09       & {\ul 70.88}     \\
    Mammography      & 19.45   & 18.81          & 43.31 & 6.24           & 9.39           & 24.13   & 14.76 & \textbf{46.50} & {\ul 44.08}    & 41.77       & 40.93           \\
    Mushroom1        & 61.45   & 32.83          & 4.99  & 64.44          & \textbf{89.77} & 28.28   & 25.52 & 83.50          & 85.90          & 83.63       & {\ul 89.63}     \\
    Mushroom2        & 50.90   & 30.77          & 11.98 & 44.84          & 67.43          & 38.55   & 38.42 & \textbf{96.23} & {\ul 95.11}    & 86.72       & 94.69           \\
    Musk             & 28.28   & 38.39          & 49.27 & 73.90          & {\ul 99.52}    & 83.25   & 49.93 & 15.41          & 81.58          & 70.42       & \textbf{100.00} \\
    Optdigits        & 5.32    & 2.80           & 2.87  & 3.45           & 32.84          & 15.82   & 4.94  & 48.24          & \textbf{97.44} & {\ul 73.38} & 37.82           \\
    PageBlocks       & 43.46   & {\ul 53.55}    & 52.03 & 16.49          & 36.62          & 20.51   & 17.84 & 42.95          & 41.40          & 48.00       & \textbf{58.82}  \\
    Pima             & 46.04   & 44.68          & 46.31 & {\ul 51.33}    & \textbf{53.57} & 43.64   & 47.43 & 45.71          & 46.38          & 48.44       & 49.72           \\
    Satellite        & 67.44   & 54.39          & 52.60 & 64.49          & 70.47          & 74.49   & 70.14 & 75.48          & \textbf{80.85} & {\ul 79.55} & 74.93           \\
    Satimage-2       & 63.48   & 75.49          & 65.61 & 79.33          & 82.20          & 61.57   & 81.27 & 32.02          & \textbf{90.92} & 67.50       & {\ul 83.38}     \\
    Sick             & 5.98    & 5.14           & 1.98  & 2.39           & 5.19           & 4.44    & 5.02  & 8.71           & 8.87           & {\ul 9.25}  & \textbf{10.21}  \\
    SpamBase         & 57.75   & 44.77          & 51.81 & 47.46          & 62.84          & 59.16   & 62.11 & 80.19          & {\ul 84.41}    & 76.80       & \textbf{86.31}  \\
    Thyroid          & 52.48   & 46.16          & 46.83 & \textbf{77.05} & 23.80          & 37.41   & 21.85 & 68.99          & {\ul 76.94}    & 69.88       & 75.98           \\
    \midrule
    Average          & 41.40   & 39.12          & 36.65 & 43.30          & 50.67          & 40.96   & 37.35 & 49.87          & {\ul 62.92}    & 58.88       & \textbf{66.11}  \\
    \bottomrule
    \end{tabular}}
    \end{table*}
    Table~\ref{PR} shows the results regarding AP, FROD achieved similar results to AUC. Namely, FROD performs the best on 6 datasets (Annthyroid, Arrhythmia, Musk, PageBlocks, Sick, and SpamBase), and ranks second on 3 datasets. The average score FROD obtained is much higher than the best-unsupervised method WFRDA (15.44\%) and the best-semisupervised competitor PReNet (3.19\%). The results confirm FROD's effectiveness in detecting outliers with a very limited number of labeled data in various scenarios.

    However, FROD does not perform the best in some cases, e.g., PReNet beats FROD on Optdigits and Satellite by 9.56\% and 1.94\%, respectively. Moreover, some unsupervised methods perform even better than semi-supervised ones with a low level of supervision (1\%), for example, the best-unsupervised method WFRDA (on average) achieves 4.5\% and 0.92\% higher AUC than FROD on Pima and Arrhythmia, respectively.
    The reason may be that FROD's assumption of information entropy-based outliers is not quite appropriate in situations involving unknown outlier types.
    {\color{blue} Furthermore, we observed that FROD achieved relatively low AP scores on the Sick (10.21) and Optdigits (37.82) datasets, as shown in Table~\ref{PR}. For the Sick dataset, all comparison methods had very low AP scores, indicating that this is a particularly difficult task. Nevertheless, our method outperforms all others on this dataset. The Optdigits dataset, being an image dataset with spatial dependencies among features, presents a challenge for FROD, as our method does not incorporate image-specific techniques. Consequently, It is not well suited for this data type.
    }

\begin{figure}[!hbt]
    \centering
    \includegraphics[width=8cm]{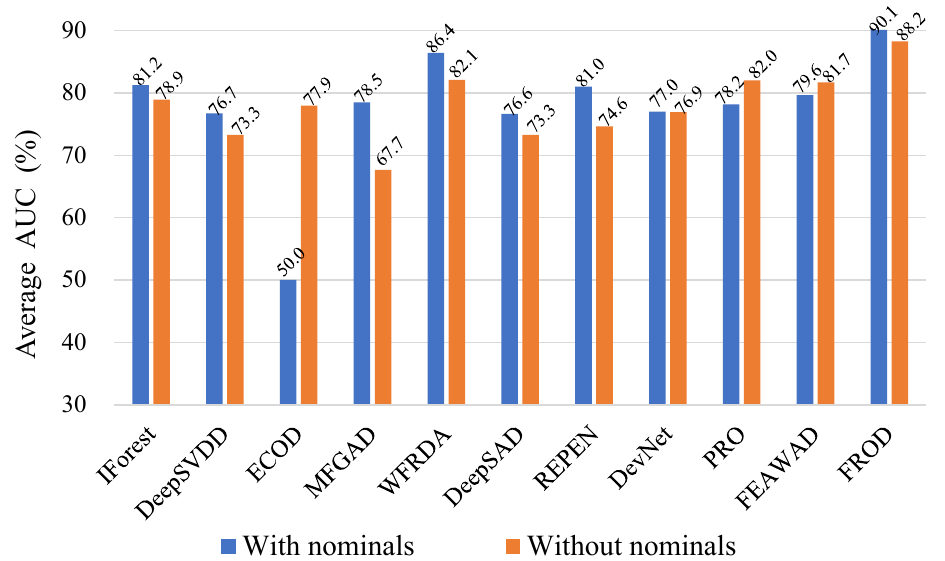}
    \caption{Average AUC (\%) on datasets with and without nominal attributes (@1\% labeled data)}\label{AUC-nominals}
    \end{figure}

\begin{figure}[!hbt]
    \centering
    \includegraphics[width=8cm]{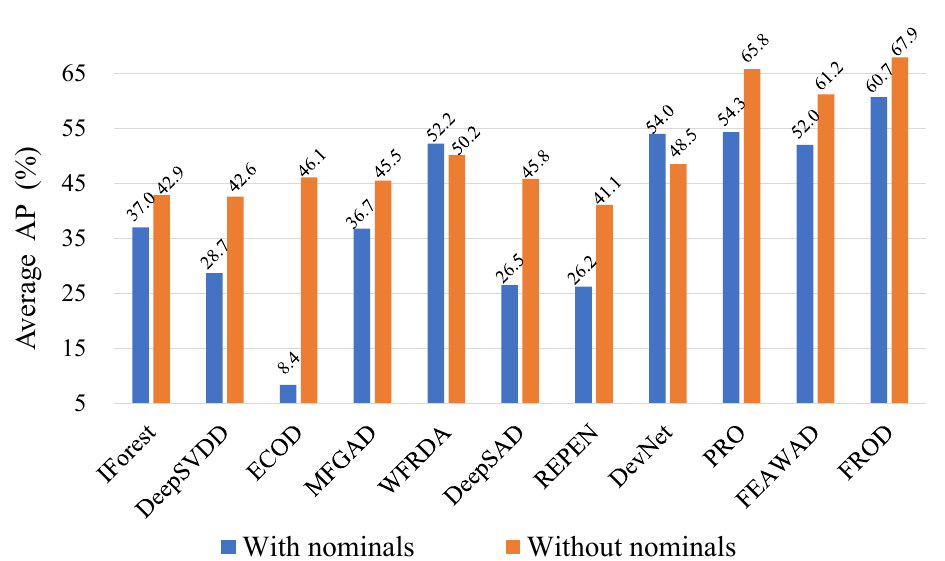}
    \caption{Average AP (\%) on datasets with and without nominal attributes (@1\% labeled data)}\label{AP-nominals}
    \end{figure}
    
\subsubsection{Effectiveness with mixed-attribute data} 
    As heterogeneous data with various types of values are widely available in real applications, this part studies the detection performance on datasets with and without nominal attributes. The four datasets Arrhythmia, Mushroom1, Mushroom2, and Sick contain nominal attributes, while others are all numerical. Fig. \ref{AUC-nominals} summarizes the average AUC scores (@1\% labeled data). One can observe that FROD performs 9.1\% and 3.7\% higher than the best semi-supervised competing model REPEN and the best-unsupervised algorithm WFRDA on datasets with nominal attributes, respectively. Fig. \ref{AP-nominals} shows the results in terms of AP. FROD achieves a 6.4\% higher AP score than the best competitor PReNet, and 8.5\% higher than the best-unsupervised competing algorithm WFRDA. 
    This may be attributed to FROD's utilization of a hybrid fuzzy membership function to directly handle nominal attributes without introducing improper assumptions in data. Therefore, FROD can make better use of heterogeneous information in data.
    
\subsubsection{Performances with various amounts of labeled data}
    This part tries to answer how effective FROD is in utilizing labeled data w.r.t various levels of supervision. The ratio of labeled data varies from 1\% to 16\% with the number of outliers ranging from 1 to 92.
    Fig. \ref{fig_train_ave} shows the average AUC and AP scores w.r.t multiple proportions of labeled data, and Fig. \ref{fig_train_auc} details the AUC results on 16 datasets. 

\begin{figure}[!hbt]
    \centering
    \includegraphics[width=9cm]{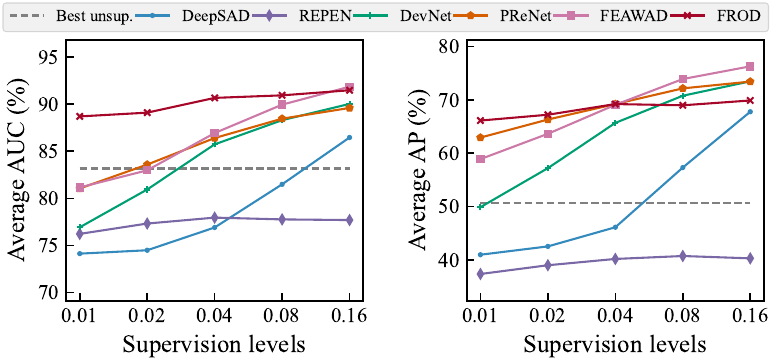}
    \caption{Average AUC and AP scores on 16 datasets w.r.t multiple supervision levels. The best-unsupervised method is WFRDA, which achieves the best average AUC among its counterparts.} \label{fig_train_ave}
    \end{figure}

\begin{figure*}[!hbt]
    \centering
    \includegraphics[width=\textwidth]{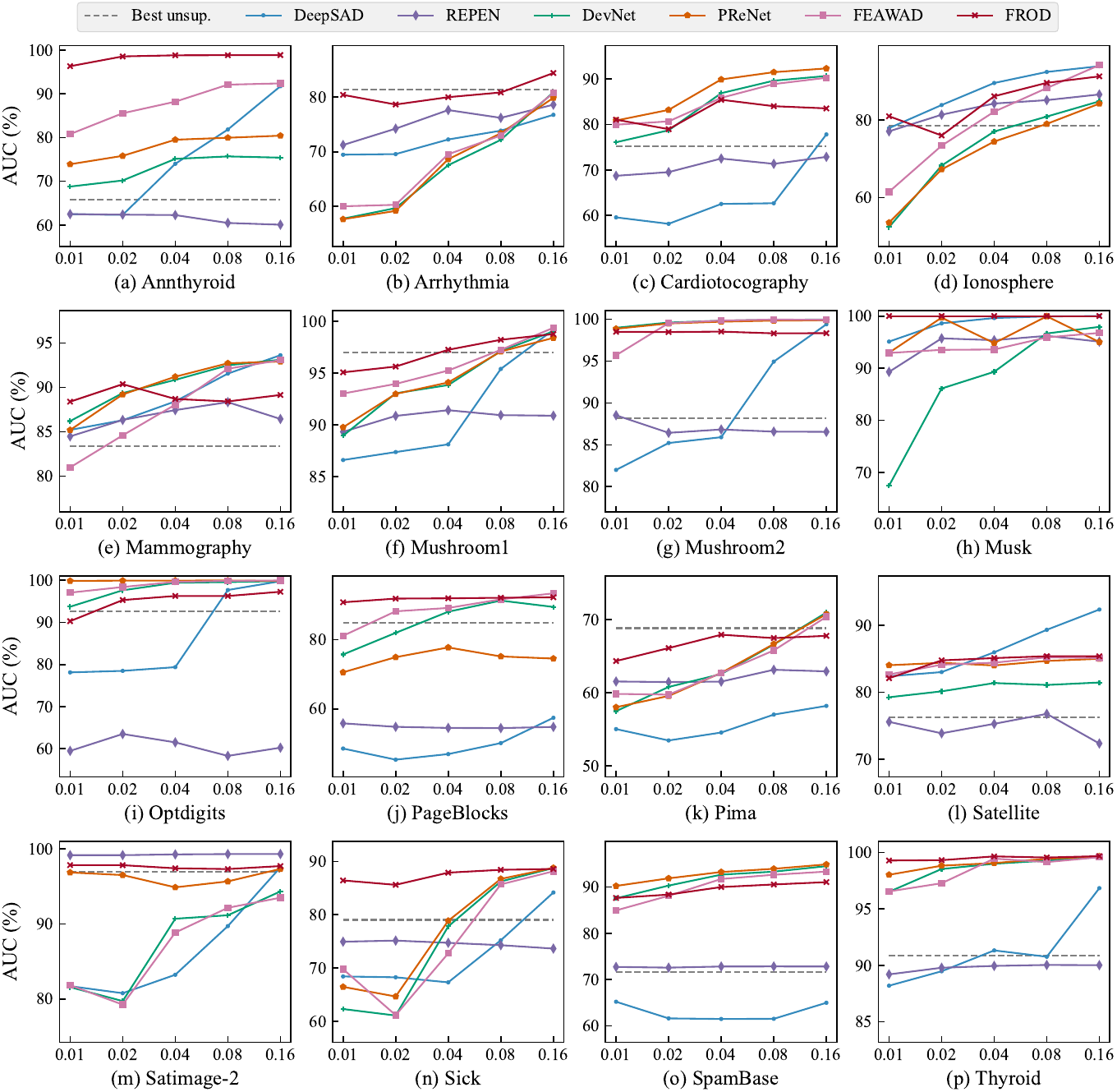}
    \caption{AUC scores w.r.t multiple proportions of labeled data on 16 experimental datasets. The best unsupervised method is WFRDA.}
    \label{fig_train_auc}
    \end{figure*}
    
    As seen from Fig. \ref{fig_train_ave}, FROD is the most data-efficient method, which achieves the best average AUC and AP scores on 16 datasets with the lowest number of labeled outliers. FROD requires only 1/8 (1/4) of labeled data to achieve comparable AUC (AP) results to the competing method FEAWAD.
    Additionally, when a minimal number of labeled data (e.g., 2\% for AUC, 1\% for AP) are used, the semi-supervised methods outperform the best-unsupervised method WFRDA. For example, the average improvement of PReNet and FEAWAD over WFRDA is more than 12.25\% and 8.21\% in terms of AP, respectively. The result of FROD in the same condition is 15.44\% higher than WFRDA.
    FROD's data efficiency is mainly due to its adoption of the fuzzy relative entropy-based outlier scoring method from a considerable number of unlabeled data. This allows FROD to characterize outliers effectively and efficiently.
    
    Moreover, as shown in Fig. \ref{fig_train_auc}, although semi-supervised methods generally improve with the increase of supervision ratio, some competing deep learning-based detectors drop, e.g., REPEN on Annthyroid and Mushroom2, Optdigits and satelite; PReNet on PageBlocks and satimage-2; DeepSAD on SpamBase; as well as FROD on Mammography. 
    The different distribution of outliers may cause this, and they may reduce detection performance when the added data have abnormal behavior that conflicts with the previous. 
    In this case, FROD is more stable.

\subsubsection{Statistical significance analysis}
    To investigate whether there are statistically significant differences between FROD and comparison methods, we use Friedman’s test and Nemenyi’s post-hoc test \cite{Yuan2023WFRDA} to examine the statistical significance of the detection performances across 16 datasets.
    Nemenyi’s test produces a critical difference diagram that shows the critical difference (CD) value and the groups of comparison algorithms. The CD value is the minimum difference between two classifiers that are statistically significant. If the difference between two detectors is greater than CD, then they are considered significantly different from each other at a given significance level.

\begin{figure}[!h]
    \centering
    \includegraphics[width=7cm]{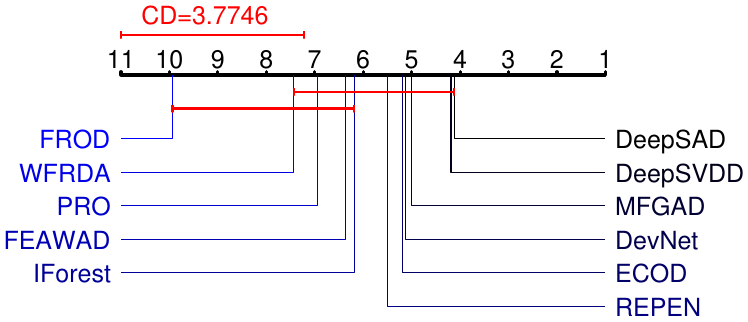}
    \caption{Nemenyi's test figure on AUC. The CD value is $3.7746$ at the significance level of $0.05$.}\label{AUC-CD}
    \end{figure}

    
    Fig. \ref{AUC-CD} shows the results of Nemenyi’s test in terms of AUC.   We can see that FROD is only connected by a red line with WFRDA, PReNet, FEAWAD, and IForest, while not connected with the other 6 comparative methods, i.e., DeepSAD, DeepSVDD, MFGAD, DevNet, ECOD, and REPEN. This means that FROD is statistically significantly different from these 6 algorithms. However, all the other algorithms are linked by a red line, which indicates that there is no statistically significant difference among them. 
    This is due to the fact that the experimental datasets we adopted originate from a very wide domain with complex and diverse anomaly mechanisms, and there is no one-size-fits-all detection method that can significantly outperform other methods in all cases \cite{han2022ADbench}.

\section{Conclusion}
    In this paper, we propose a fuzzy rough sets-based outlier detection (FROD) algorithm for mixed-attribute data in a semi supervised framework. We introduce attribute classification accuracy derived from FRS to evaluate the contribution of attribute sets in detecting outliers. Additionally, we define fuzzy relative entropy to characterize outliers from the perspective of uncertainty. Our FRS-based detection algorithm adeptly handles the uncertainty and imprecision inherent in complex data. 
    Compared to other detection algorithms intrinsically designed for numerical data, our approach can detect outliers with high precision and data efficiency on real-world datasets with mixed attributes.
    In future work, we will take into account the types of outliers as well as explore the boundary regions of FRS to improve detection performance and robustness.

\section{Acknowledgement}
This work was supported by the National Natural Science Foundation of China (62306196 and 62376230), the Sichuan Science and Technology Program (2024NSFTD0049, 2024YFHZ0144, 2024YFHZ0089 and 2024NSFSC0443), and the Fundamental Research Funds for the Central Universities (YJ202245).

\bibliographystyle{elsarticle-num-names}
\bibliography{mybib}

\end{document}